\def\BibTeX{{\rm B\kern-.05em{\sc i\kern-.025em b}\kern-.08em
    T\kern-.1667em\lower.7ex\hbox{E}\kern-.125emX}}
\newcommand\mydef{\mathrel{\stackrel{\makebox[0pt]{\mbox{\normalfont\tiny def}}}{=}}}
\theoremstyle{plain}
\theoremstyle{definition}
\theoremstyle{remark}
\newcommand{\rounding}{\circ}
\newcommand{\fp}{\mathbf{F}}
\begin{document}
%
% \title{MIT: Merging Independently Trained Neural Networks based on Genetic Algorithm}
\title{Revisiting 16-bit Neural Network Training: A Practical Approach for Resource-Limited Learning}
\titlerunning{16-bit Neural Network Training}
% If the paper title is too long for the running head, you can set
% an abbreviated paper title here
%
\author{Juyoung Yun\inst{1} \and
Sol Choi\inst{1} \and
Francois Rameau\inst{2} \and
Byungkon Kang\inst{2}\and \\ 
Zhoulai Fu\inst{2,3,*}
}
\authorrunning{J. Yun et al.}
% First names are abbreviated in the running head.
% If there are more than two authors, 'et al.' is used.
%
\institute{Stony Brook University, Stony Brook, NY, 11794, USA
\email{\{juyoung.yun,sol.choi\}@stonybrook.edu} \and
State University of New York, Korea, Incheon, 21985,  Republic of Korea
\email{\{francois.rameau,byungkon.kang,zhoulai.fu\}@sunykorea.ac.kr} \and
Virginia Tech, Blacksburg, VA, 24061, USA \\
*Corresponding author.
}

\maketitle              % typeset the header of the contribution
\begin{abstract}
With the increasing complexity of machine learning models, managing computational resources like memory and processing power has become a critical concern. Mixed precision techniques, which leverage different numerical precisions during model training and inference to optimize resource usage, have been widely adopted. However, access to hardware that supports lower precision formats (e.g., FP8 or FP4) remains limited, especially for practitioners with hardware constraints. For many with limited resources, the available options are restricted to using 32-bit, 16-bit, or a combination of the two. While it is commonly believed that 16-bit precision can achieve results comparable to full (32-bit) precision, this study is the first to systematically validate this assumption through both rigorous theoretical analysis and extensive empirical evaluation. Our theoretical formalization of floating-point errors and classification tolerance provides new insights into the conditions under which 16-bit precision can approximate 32-bit results. This study fills a critical gap, proving for the first time that standalone 16-bit precision neural networks match 32-bit and mixed-precision in accuracy while boosting computational speed. Given the widespread availability of 16-bit across GPUs, these findings are especially valuable for machine learning practitioners with limited hardware resources to make informed decisions.

\keywords{Neural Networks \and Deep Learning \and Low Precision \and Efficient Deep Learning}
\end{abstract}

\section{Introduction}
\label{sect:intro}

As machine learning models grow in complexity and size, the demand for computational resources such as memory and processing power has become a critical challenge. To address this, the mixed precision technique has been developed to balance resource consumption and computational efficiency~\cite{mixed-prec-training,wang2018training,bin-connect,xi2023training}. This approach strategically employs different numerical precision types during forward and backward propagation, weight storage, or inference. For instance, GPT-3 was trained with mixed precision of 32-bit and 16-bit formats~\cite{brown2020language}, while Meta's LLaMA 3 employs FP8 precision in both inference and forward pass training phases~\cite{huggingface2024,pytorch2024}.

The trend toward mixed precision is closely linked to the development of lower-precision numerical formats. The ongoing ML arithmetic standard IEEE P3109 has formalized seven 8-bit floating-point formats~\cite{IEEE_P3109_2024}, and the Nvidia Blackwell chip is pioneering 4-bit precision~\cite{DBLP:journals/corr/abs-2402-17764,DBLP:journals/corr/abs-2402-05147}. However, while mixed precision is designed to mitigate resource pressures, access to this technology often remains elusive for many machine learning practitioners facing hardware limitations.

Table~\ref{tab:fp_accessibility} shows major Nvidia GPUs and their hardware support for floating-point operations. Both FP32 and FP16 are supported across all GPUs listed, while support for newer formats like bfloat16 and FP8 is limited to recent architectures~\cite{nvidia_tensor_cores, nvidia_ampere_tuning_guide}. For practitioners constrained by hardware, the options for training models are typically limited to using 32-bit, 16-bit, or their combination in mixed precision.

\setlength{\tabcolsep}{8pt}
\begin{table}[H]
\centering
\footnotesize
\caption{Supported floating-point precisions across major NVIDIA GPU architectures, showing both CUDA Core and Tensor Core capabilities.~\cite{nvidia_tensor_cores, nvidia_ampere_tuning_guide}}

\begin{adjustbox}{width=1.0\columnwidth}
\begin{tabular}{@{}llllll@{}}
\toprule
& \multicolumn{5}{c}{\textbf{Nvidia GPU Architectures}} \\
\cmidrule(lr){2-6} 
 & \textbf{Blackwell} & \textbf{Hopper} & \textbf{Ada Lovelace} & \textbf{Ampere} &\textbf{ Turing} \\
\midrule
\textbf{Supported}   & FP16, & FP16, & FP16, & FP16, & FP16, \\
\textbf{CUDA*}       & FP32, FP64, & FP32, FP64, & FP32, FP64, &FP32, FP64 & FP32, FP64 \\
\textbf{Core precision} & bfloat16 & bfloat16 & bfloat16 & & \\
\midrule
\textbf{Supported }  & FP16, FP64, & FP16, FP64, & FP16, FP64, & FP16, FP64, & FP16 \\
\textbf{Tensor}  & bfloat16, TF32, & bfloat16, TF32& bfloat16, TF32& bfloat16, TF32& \\
\textbf{Core precision }& FP8, FP6, FP4& FP8& FP8&& \\
\bottomrule
\end{tabular}
\end{adjustbox}
\label{tab:fp_accessibility}
\end{table}

Among these options, 16-bit precision should theoretically be preferred, as it offers memory and performance advantages while being well-supported by modern hardware. Yet, surprisingly, there's limited information about the ability of 16-bit formats to work independently. This work focuses on IEEE binary16 due to its wider accessibility compared to Google Brain's bfloat16.

The absence of research on standalone 16-bit functionality can be attributed to two factors: the shift towards even lower precision models by those with advanced hardware, and the observation of numerical instability in early experiments that led to mixed-precision solutions~\cite{wang2018training}. While direct attempts to train networks using 16-bit precision often fail in frameworks like TensorFlow or PyTorch, we show these issues stem from incorrect parameter settings, particularly in the Adam optimizer's epsilon value.

Our work proceeds as follows. We start with a theoretical examination of accuracy differences between 16-bit and 32-bit formats, proposing a lemma that ensures comparable classification results under specific conditions. Through extensive comparisons against 32-bit and mixed-precision training, we demonstrate the viability of standalone 16-bit training in image classification using CNN and ViT~\cite{dosovitskiy2021image} architectures. To ensure reproducibility, we provide detailed experimental settings and will release our code upon publication.
\section{Related Work}
\label{sect:relwork}
In the past decade, we have observed that scaling networks significantly improves their abilities to resolve complex tasks \cite{krizhevsky2012imagenet}. These increasingly large networks require extensive computations for their training and large-scale deployment. This trend raises several issues regarding accessibility, reproducibility, efficiency, environmental impact, and real-time execution. Given these critical challenges, numerous approaches have been proposed to address them and offer solutions to reduce energy and memory footprints while decreasing both running and training time. Quantization techniques reduce the precision of weights and activations to improve efficiency, as demonstrated by Jacob et al. \cite{quant-nn}, while methods like PACT \cite{choi2018pact} further optimize performance by learning clipping ranges. Pruning, as explored by Han et al. \cite{han2015learning}, removes redundant parameters to compress models, and Gale et al. \cite{gale2019state} have shown its effectiveness in reducing resource usage without significant accuracy loss. Distillation, introduced by Hinton et al. \cite{hinton2015distilling}, transfers knowledge from large models to smaller ones, and Touvron et al. \cite{touvron2021training} have adapted it for transformers.\\

\noindent\textbf{Low Precision Training Pipeline.} Training neural networks with low-bit precision while maintaining accuracy has been a critical focus in the literature. Early work \cite{low-prec-sgd} showed that using low-precision multiplications can yield results comparable to 32-bit training, thanks to selective precision adjustments. Recent work such as LLM.int8() \cite{dettmers2022llm} introduces efficient 8-bit matrix multiplication techniques for transformers, though requiring careful handling of sensitive layers where quantization errors may arise. At the core of low-precision approaches are two main types: floating-point and fixed-point formats. Floating-point numbers, commonly used in deep learning, can represent a wide range of values crucial for complex tasks requiring high precision. In contrast, fixed-point formats \cite{fixed-point} use a fixed allocation of bits for integer and fractional parts. While offering faster computation and lower memory usage, fixed-point formats face fundamental challenges with range and precision, particularly struggling with tasks that demand wider dynamic range. Studies like \cite{dl-limprec}, \cite{fxpnet}, and \cite{fp-cnn} demonstrate how these limitations lead to accuracy drops in neural networks, especially in larger, more complex models where accuracy is highly sensitive to weight and activation precision.
Integer quantization approaches, including \cite{int-dnn}, \cite{mixed-prec-cnn}, and \cite{quant-nn}, attempt to address these challenges but still face accuracy degradation in complex models. Even recent advances like SmoothQuant \cite{smoothquant} and FlexPoint \cite{flexpoint} require careful tuning. For example, while Q8BERT \cite{zafrir2019q8bert} shows promise with 8-bit quantization, TernaryBERT \cite{zhang2020ternarybert} faces significant accuracy challenges due to its lower bit-width.
Our research proposes a simple yet effective method using IEEE 16-bit floating-point operations for both training and inference, avoiding the inherent limitations of fixed-point representations while maintaining the precision necessary for modern deep learning. \\

\noindent\textbf{Mixed Precision.} The inception of practical low-precision mechanisms traces back to Courbariaux et al. \cite{courbariaux2015binaryconnect}, who devised BinaryConnect. XNOR-Net \cite{rastegari2016xnor} extended this by introducing binary operations into CNNs. Zhou et al. \cite{zhou2016dorefa} introduced DoReFa-Net for edge devices. Micikevicius et al. \cite{mixed-prec-training} proposed mixed precision training using 16-bit operations while maintaining 32-bit weights. Wang et al. \cite{wang2018training} developed an 8-bit training method with 16-bit accumulations. Koster et al. \cite{koster2020bfloat16} revisited BF16, making it effective for deep learning tasks without significant accuracy loss. \\

\noindent\textbf{Hardware Support.} Sharma et al. \cite{bitfusion} introduced BitFusion, a dynamic hardware accelerator optimizing resource use. Fixed-point arithmetic is supported by FPGAs and ASICs \cite{fixed-point}, and fixed-point DSP processors are commonly used in embedded systems \cite{fpga-dsp}. Google's TPU supports BF16 \cite{koster2020bfloat16}, while NVIDIA's recent architectures expanded support for various precisions \cite{Nvidia}. While general-purpose GPUs support low-precision formats, specialized hardware such as BitFusion, TPUs, and fixed-point devices are essential for maximizing the benefits of low-precision computing. \\

Our work addresses this gap by rigorously investigating the efficacy of standalone IEEE 16-bit floating-point precision in various neural network architectures. Through both theoretical and experimental analysis, we provide evidence that standalone 16-bit precision can achieve comparable performance to 32-bit and mixed-precision approaches while offering significant improvements in computational efficiency. This study paves the way for more accessible, energy-efficient machine learning applications, especially for practitioners who may not have access to high-end hardware but still require efficient, low-precision solutions.

% \section{Background}
% \label{sect:background}
% \input{3-background}

\section{Theoretical Analysis}
\label{sect:theory}
This section presents a theoretical comparison between standalone IEEE 16-bit and 32-bit floating-point deep neural networks. 
The objective is to  study the reasons for any observed differences or similarities in their performance.

\subsection{Background: Floating-Point Error}
% Hereafter,  reference to "16-bit" will specifically denote the IEEE 16-bit floating-point format.

We denote the sets of real numbers and integers by $\mathbb{R}$ and $\mathbb{Z}$. We write $\fp_{32}$ and $\fp_{16}$ for the sets of IEEE 32-bit and 16-bit floating-point numbers, respectively, excluding $\pm\infty$ and NaN (Not-a-Number). Following the IEEE-754 standard of floating-point formats~\cite{4610935}, an IEEE 16-bit floating-point number can be exactly represented as a 32-bit number by padding with zeros, namely, the following inclusion relationship holds:	$\fp_{16} \subset \fp_{32} \subset \mathbb{R}.$

Rounding is necessary when representing real numbers that cannot be exactly represented. The rounding error of a real number $x$ at precision $p$ is defined as $|x - \rounding_p(x)|$, where $\rounding_p : \mathbb{R} \to \fp_p$ is the rounding operation that maps $x$ to its nearest floating-point number in $\fp_p$, namely $\rounding_p(x) \mydef \mathrm{argmin}_{y \in \fp_p} |x - y|$. (This definition of the rounding operation ignores the case with a tie for simplicity.)

Floating-point error includes rounding error and its propagation.  For example, the floating-point code {\tt sin(0.1)} goes through three approximations. First, 0.1 is rounded to the floating-point number $\rounding_p(0.1)$ for some precision $p$. Then, the rounding error is propagated by the floating-point code {\tt sin} proportional to its condition number~\cite{higham2002accuracy}. Lastly, the output of the calculation is rounded again if an exact representation is not possible.

Floating-point error is usually measured in terms of absolute error or relative error.  This paper uses the absolute error measurement, which quantifies the difference between two floating-point numbers $x$ and $y$ by a straightforward $|x - y|$.

\subsection{Theory:  Error vs.  Tolerance}

\paragraph{Notation.} A dataset $D\subseteq \mathcal{X}\times \{0,1,\ldots, n-1\}$ is a set of pairs of samples and labels. A classifier $M_r$ of precision $r\in \{16,32\}$, also known as classification model, is a function $\mathcal{X}\rightarrow ([0,1]\cap \fp_r)^n$ corresponding to the 16-bit and 32-bit floating-point formats,  
 
\begin{definition}
Let $x\in \mathcal{X}$ be a sample input. Suppose $M_{16}$ and $M_{32}$ are 16-bit and 32-bit deep learning models trained by the same neural network architecture and hyper-parameters. We define the \emph{floating point error} between the classifiers on $x$ as
	\begin{equation}
		\label{eq:delta}
		%\delta (M_{32}, M_{16}, x)\mydef\vert M_{32}(x) - M_{16}(x)\vert_{\infty}.
  		\delta (M_{32}, M_{16}, x)\mydef\max\{d|d\in\vert M_{32}(x) - M_{16}(x)\vert\},
	\end{equation}
where $d\in\vert M_{32}(x) - M_{16}(x)\vert$ denote that $d$ is an element of the vector $\vert M_{32}(x) - M_{16}(x)\vert$.
\end{definition}

Below, we investigate the degree to which the floating error affects the model accuracy.
We first formalize the concept of \textit{classification tolerance} and then prove a lemma, which provides a theoretical guarantee on the model accuracy of the 16-bit floating-point neural network with regard to its 32-bit counterpart.  

\begin{definition}
	The \emph{classification tolerance} of a classifier $M$ with respect to an sample $x\in\mathcal{X}$ is defined as  the difference between the largest probability and the second-largest one:
	\begin{equation}
		\Gamma(M,x)\mydef p_0 - p_1,
		\label{eq:gamma}
	\end{equation}
	where $p_0=\max \{p | p \in M(x)\}$, and $p_1=\max \{p | p \in M(x) \text{ and } p \neq p_0\}$. Here,  $p\in M(x)$ denotes that $p$ is an element of the vector $M(x)$.
\end{definition}

\begin{lemma}\label{lem:twice}
Let $\mathrm{class}(M_r, x)$ denote the classification result of classifiers $M_r$ for $r\in\{16,32\}$ on a sample $x\in \mathcal{X}$. Namely, $\mathrm{class}(M_r, x)\mydef\mathrm{argmax}_{i}\{p_i\vert p_i\in M_r(x)\}$. We have: If  
	\begin{align} \label{eq:twice}
		\Gamma( M_{32},x) \geq 2 	\delta (M_{32}, M_{16}, x),
	\end{align}
	then $\mathrm{class}(M_{32}, x)=\mathrm{class}(M_{16}, x)$. 
\end{lemma}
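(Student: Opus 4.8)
The plan is to reduce the claim to a short chain of inequalities built from the two definitions together with the elementary observation that $\delta(M_{32},M_{16},x)$ bounds the componentwise deviation between the two output vectors. First I would fix the winning index for the 32-bit model, say $i^\star \mydef \mathrm{class}(M_{32},x)$, so that $M_{32}(x)_{i^\star}=p_0$ while every other coordinate satisfies $M_{32}(x)_j \le p_1$ for $j\neq i^\star$; this is exactly what the definition of $\Gamma$ in \eqref{eq:gamma} records, since $p_0$ and $p_1$ are the largest and second-largest entries of $M_{32}(x)$. Writing $\delta \mydef \delta(M_{32},M_{16},x)$, the definition in \eqref{eq:delta} gives $|M_{32}(x)_k - M_{16}(x)_k|\le \delta$ for every coordinate $k$, because $\delta$ is the maximum entry of the absolute difference vector.

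Next I would transfer these bounds to the 16-bit model. For the winning coordinate, $M_{16}(x)_{i^\star}\ge M_{32}(x)_{i^\star}-\delta = p_0-\delta$, and for any competitor $j\neq i^\star$, $M_{16}(x)_j \le M_{32}(x)_j+\delta \le p_1+\delta$. Subtracting and invoking the hypothesis \eqref{eq:twice} yields
\begin{align}
M_{16}(x)_{i^\star}-M_{16}(x)_j \ge (p_0-\delta)-(p_1+\delta) = \Gamma(M_{32},x)-2\delta \ge 0,
\end{align}
so coordinate $i^\star$ dominates every other coordinate of $M_{16}(x)$, forcing $\mathrm{class}(M_{16},x)=i^\star=\mathrm{class}(M_{32},x)$. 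Note that only the triangle inequality and the two definitions are used, so no auxiliary estimates on the rounding operation itself are needed.

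The only delicate point is the boundary case, and I expect it to be the main thing to address. The derivation above yields only a non-strict inequality $M_{16}(x)_{i^\star}-M_{16}(x)_j\ge 0$, so when $\Gamma(M_{32},x)=2\delta$ exactly, the 16-bit model could in principle tie between $i^\star$ and some competitor, in which case the value returned by $\mathrm{argmax}$ depends on the tie-breaking convention. I would resolve this the same way the paper already treats rounding ties, namely by adopting the convention that $\mathrm{argmax}$ ignores ties; alternatively one could strengthen the hypothesis to the strict inequality $\Gamma(M_{32},x)>2\delta$ to obtain strict dominance and hence a unique, matching argmax. Either route closes the argument cleanly, and everything preceding it is routine.
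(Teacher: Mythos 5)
Your proof is correct and follows essentially the same route as the paper's: bound the winning coordinate of $M_{16}(x)$ from below by $p_0-\delta$, bound every competitor from above by $p_1+\delta$, and subtract using the hypothesis $\Gamma(M_{32},x)\geq 2\delta$. Your remark about the boundary case $\Gamma(M_{32},x)=2\delta$ (where only a non-strict inequality, hence a possible tie, is obtained) is a valid refinement that the paper's proof silently elides, but it does not alter the substance of the argument.
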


\begin{proof}
	Let $M_{32}(x)$ be $(p_0, .... p_{N-1})$.  Without loss of generality we assume $p_0$ is the largest one in $\{p_i\}$ ($0\leq i \leq N-1$). 
	%Let  $||M_{32}(x) - M_{16}(x)||_{\infty}$ be denoted by $\delta$. 
	We denote $\delta(M_{32}, M_{16}, x)$ by $\delta$.  Following Eq. \ref{eq:twice}, 	we have 
	\begin{align} \label{eq:gammabis}
		\forall i\in \{1,..., N-1\}, p_0  - p_i \geq  2  \delta.
	\end{align}
	Let $M_{16}(x)$ be $(p'_0, .... p'_{N-1})$. Then for each  $i\in \{1,..., N-1\}$, we have
	\begin{align*}
		p_0' &\geq p_0 - \delta  && \text{By Eq. \ref{eq:delta} and Def. of }  p_0, p_0'  \\ \nonumber
		&\geq p_i+  \delta && \text{By Eq. \ref{eq:gamma}  and  Eq. \ref{eq:gammabis}}\\ \nonumber
		&\geq p_i' &&\text{By Eq. \ref{eq:delta}}  \nonumber
	\end{align*}
	Thus $p_0'$ remains the largest in the elements of $M_{16}(x)$. 
\end{proof}

\paragraph{Explanation of the lemma.}
 Suppose $M_{32}(x)=(p_0, p_1, \ldots, p_{n-1})$ for a classification problem with  $n$ labels and let $p_0$ and $p_1$  be the largest and second largest probabilities in $\{p_i | i \in [1,n-1]\cup \mathbb{Z}\}$. Suppose $M_{16}(x) = (p_0', p_1', \ldots, p_{n-1}')$.  In the worst case (in terms of the difference between  $M_{16}(x)$ and $M_{32}(x)$), $p_0'$ can drop to $p_0 - \delta (M_{32}, M_{16}, x)$, and $p_1'$ can increase to $p_1+\delta (M_{32}, M_{16}, x)$. But as long as $p_0- \delta (M_{32}, M_{16}, x) > p_1+ \delta (M_{32}, M_{16}, x)$, the two classifiers $M_{16}$ and $M_{32}$ must have the same classification result on $x$.

\subsection{Observation: Standalone IEEE 16-bit Floating-Point DNN on MNIST}

We validate Lemma~\ref{lem:twice} by comparing the performance of a standalone IEEE 16-bit floating-point deep neural network to a 32-bit one on MNIST. Our implementation is done using TensorFlow. The 16-bit implementation follows the same architecture as the 32-bit counterpart, except that all floating-point operations are performed using 16-bit precision.

\begin{figure*}[htbp]
  \centering
  \begin{minipage}{0.48\textwidth}
    \centering
    \includegraphics[width=\textwidth]{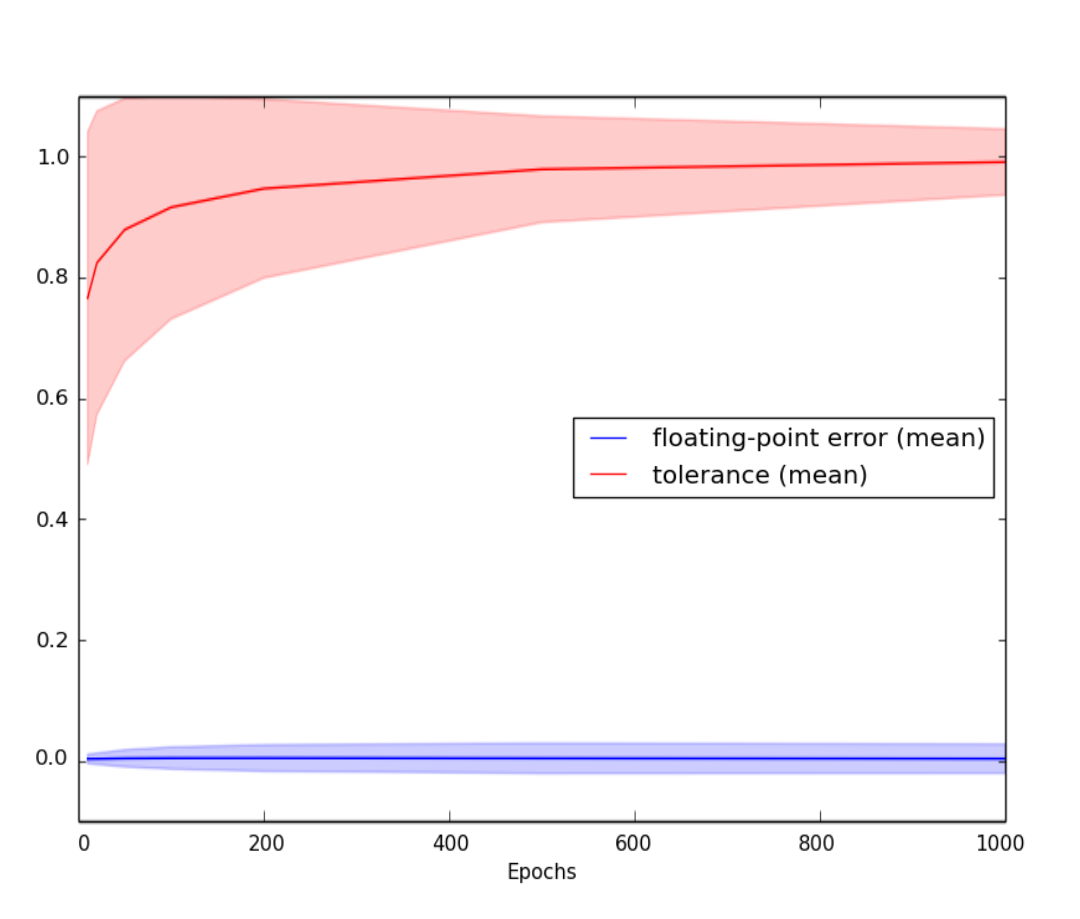}
    \vspace{-0.3cm}
    \caption{Floating-point Errors vs. Classification Tolerance: Mean $\pm$ standard deviation}
    \label{fig:err_tol}
  \end{minipage}
  \hfill
  \begin{minipage}{0.48\textwidth}
    \centering
    \includegraphics[width=\textwidth]{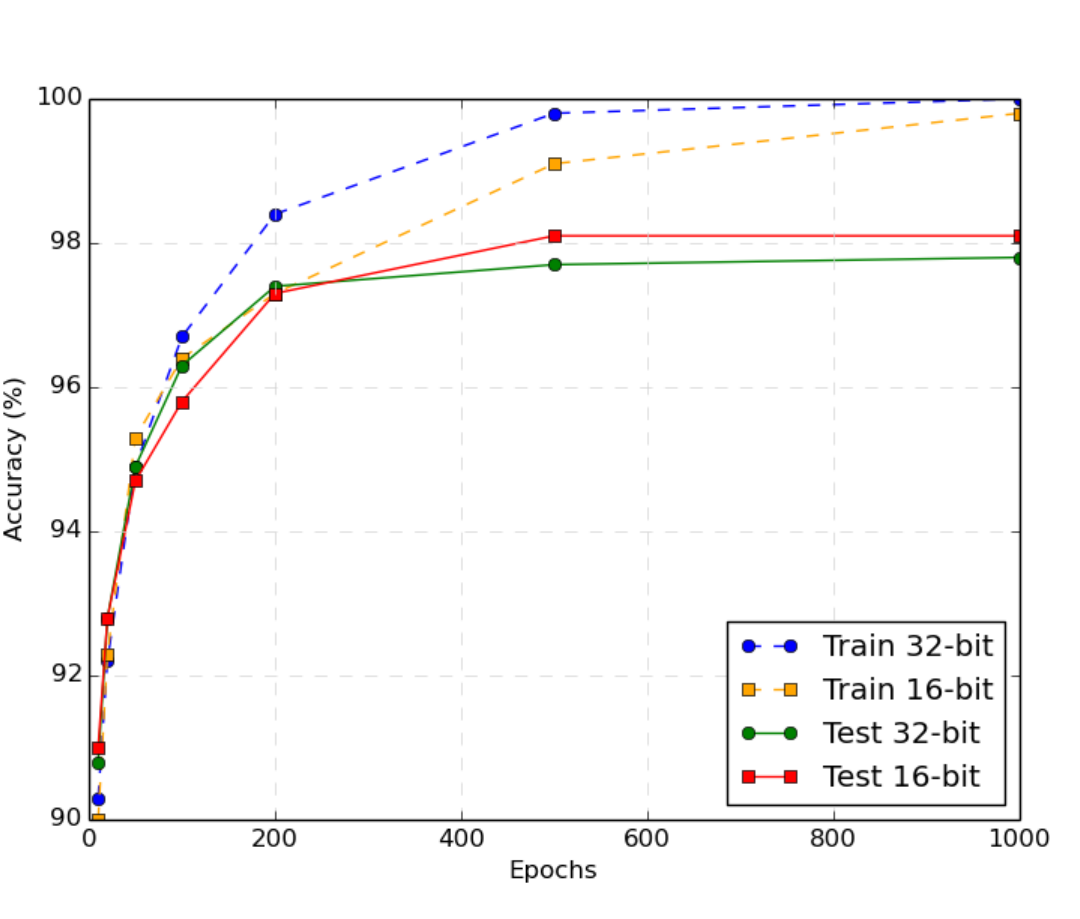}
    \vspace{-0.3cm}
    \caption{DNN Accuracies on MNIST Dataset: 32-bit vs. 16-bit floating-point}
    \label{fig:dnn_accuracy}
  \end{minipage}
  %\caption{}
\end{figure*}

Fig.~\ref{fig:err_tol} presents the results of our experiments, showing the classification tolerance $\Gamma$ and floating-point error $\delta$. Our findings indicate that the mean floating-point error has a magnitude of approximately 1E-3 with a variance of up to 1E-4, while the classification tolerance is approximately 1E-1 with a variance of 1E-2. In other words, the average classification tolerance is about 100 times larger than the average floating-point error, suggesting that Eq.~\ref{eq:twice}, namely $\Gamma > 2\delta$, holds for most samples in the MNIST dataset. Additionally, we observed that in specific cases, floating-point errors can surpass the tolerance, reaching values as high as 9.95E-01 (presumably due to overflow/underflow). As a result, we anticipate that our 16-bit and 32-bit implementations will yield highly similar yet slightly different accuracy results.

Our experiments confirm this expectation. Fig.~\ref{fig:dnn_accuracy} shows the accuracy results on MNIST comparing the performance of the two implementations. We observe that the training accuracy of the 16-bit model is close to that of the 32-bit model, while the 16-bit model achieves slightly better accuracy in terms of testing accuracy. 

This finding motivates us to investigate whether the similarity between the IEEE 16-bit and 32-bit models holds true for more complex neural networks. In theory, if a certain percentage, denoted by $a\%$, of samples in a dataset satisfy Eq.~\ref{eq:twice}, Lemma~\ref{lem:twice} guarantees that the 32-bit and 16-bit models will have at least $a\%$ of their classification results in common. However, determining the exact  $a\%$ poses a challenge. Nevertheless, we cautiously believe that for complex neural networks, a large majority of data satisfies Eq.~\ref{eq:twice}. This belief stems from the fact that the loss function for classification problems, in the form of cross-entropy $-\Sigma\log(p_i)$, guides the probabilities $p_i$ towards 1 during training, thereby resulting in a larger error tolerance $\Gamma$ compared to the relatively smaller $\delta$. 

\paragraph{Remark.}
Previous research has highlighted the ability of neural networks to tolerate noise and even act as a regularization method. However, our proposed analysis differs in two ways. Firstly, it is the first comparison of its kind that focuses on standalone IEEE 16-bit neural networks. Secondly, our analysis offers a unique quantification approach that distinguishes it from previous explanations related to regularization. Our approach examines floating-point errors and formalizes the concept of tolerance, allowing for its quantitative comparison with floating-point errors.

\section{Experimental Results}
\label{sect:eval}
We have conducted extensive experiments to address the question of whether IEEE 16-bit floating-point can function independently for image classification tasks. This section presents the results of our comparative study. We measured the wall-clock-time it took for the model to train for 100 epochs and test accuracy.

\begin{table*}\centering
\caption{Performance metrics of standalone IEEE 16-bit, 32-bit, and mixed-precision methods~\cite{mixed-prec-training} on CNN and Vision Transform architectures~\cite{dosovitskiy2021image}, using the CIFAR-10 dataset~\cite{cifar} without data augmentation and pre-trained models. All the experiments are repeated for 50 times, and we report the means and standard deviations.}
\vspace{0.15cm}
\begin{adjustbox}{width=1\textwidth}
% \scriptsize
%  \renewcommand{\arraystretch}{1.3}
% \setlength{\tabcolsep}{3pt}
\begin{tabular}{@{}lrrrcrrrcrrrcrrr@{}}\toprule

Architectures & \multicolumn{3}{c}{ Accuracy (\%)} && \multicolumn{3}{c}{Time (s)} && \multicolumn{2}{c}{Accuracy Diff.} && \multicolumn{2}{c}{Time Speedup}\\
\cmidrule{2-4} \cmidrule{6-8} \cmidrule{10-11} \cmidrule{13-14}
(\# par. million) & FP16 &FP32& MP&& FP16 & FP32 &MP && FP16 - FP32 & FP16 - MP && FP32 / FP16 & MP / FP16\\
\midrule 
AlexNet (2.09)~\cite{alexnet} & 76.0$\pm{0.3}$ & 75.8$\pm{0.3}$ & 75.9$\pm{0.3}$ && 
                96 & 174 & 150 && 
                0.2 & 0.1 && 
                1.8x & 1.5x \\
                
VGG-16 (33.76)~\cite{vgg} & 83.7$\pm{0.3}$ & 83.9$\pm{0.2}$ & 83.8$\pm{0.2}$ && 
                377 & 857 & 455 && 
                -0.2 & -0.1 && 
                2.2x & 1.2x\\

VGG-19 (38.36)~\cite{vgg} & 83.8$\pm{0.2}$ & 83.9$\pm{0.3}$ & 83.9$\pm{0.2}$ && 
                416 & 937 & 492 && 
                -0.1 & -0.1 && 
                2.2x & 1.2x\\
                    
ResNet-32 (0.47)~\cite{resnet} & 80.9$\pm{0.3}$ & 80.9$\pm{0.4}$ & 80.9$\pm{0.3}$ && 
                    413 & 551 & 483 && 
                    0.0 & 0.0 && 
                    1.3x & 1.2x \\

ResNet-56 (0.86)~\cite{resnet} & 81.6$\pm{0.4}$ & 81.4$\pm{0.6}$ & 81.5$\pm{0.6}$ && 
                    677 & 905 & 795 && 
                    0.2 & 0.1 && 
                    1.3x & 1.2x\\

ResNet-110 (1.78)~\cite{resnet} & 81.8$\pm{0.4}$ & 81.4$\pm{0.5}$ & 81.8$\pm{0.5}$ && 
                    1256 & 1712 & 1486 && 
                    0.4 & 0.0 && 
                    1.3x & 1.2x \\
% ResNet-156 (-) & \\

% ResNeXt-50 (-) & \\
% ResNeXt-101 (-) & \\
DenseNet121 (7.04)~\cite{densenet} & 72.1$\pm{0.3}$ & 72.6$\pm{0.4}$ & 73.0$\pm{0.3}$ && 
                    539 & 720 & 641 && 
                    -0.5 & -0.9 && 
                    1.3x & 1.2x \\
                    
DenseNet169 (12.65)~\cite{densenet} & 71.7$\pm{0.5}$ & 72.3$\pm{0.3}$ & 72.1$\pm{0.3}$ && 
                    724 & 966 & 812 && 
                    -0.6 & -0.4 && 
                    1.3x & 1.1x \\

Xception (22.96)~\cite{xception} & 75.9$\pm{0.4}$ & 76.3$\pm{0.3}$ & 76.3$\pm{0.4}$ && 
                    324 & 611 & 412 && 
                    -0.4 & -0.4 && 
                    1.9x & 1.3x \\
% InceptionV3 (-) & 0.$\pm{0.0}$ & 0.$\pm{0.0}$ & 0.$\pm{0.0}$ && 
%                     0 & 0 & 0 && 
%                     -0.0 & -0.0 && 
%                     x & x \\
MobileNetV2 (2.27)~\cite{mobilenetv2} & 69.4$\pm{1.1}$ & 70.0$\pm{1.2}$ & 70.0$\pm{1.1}$ && 
                    353 & 588 & 411 &&
                    -0.6 & -0.6 &&
                    1.7x & 1.2x \\

VIT-8 (2.02)~\cite{dosovitskiy2021image} & 71.0$\pm{0.3}$ & 71.3$\pm{0.3}$ & 71.2$\pm{0.3}$ && 
                316 & 423 & 410 && 
                -0.3 & -0.2 &&
                1.3x & 1.2x \\

VIT-12 (2.55)~\cite{dosovitskiy2021image} & 71.1$\pm{0.3}$ & 71.4$\pm{0.3}$ & 71.5$\pm{0.3}$ && 
                425 & 663 & 629 && 
                -0.3 & -0.4 &&
                1.6x & 1.5x \\
\midrule
Mean 
&&&&&&&&& -0.2 & -0.2 && 1.6x & 1.3x
% & & & && & & && {\bf 0.0\%} & {\bf 0.4} \% && {\bf 1.6x} & {\bf 1.2x}
\\
\bottomrule
\end{tabular}
\end{adjustbox}
% \vspace{0.3cm}
\label{tab:summary}
\end{table*}

\begin{figure*}[t]
  \centering
  % \includesvg[width=1.0\textwidth]{newfig/TESTACCG.png}
  \includegraphics[width=0.9\textwidth]{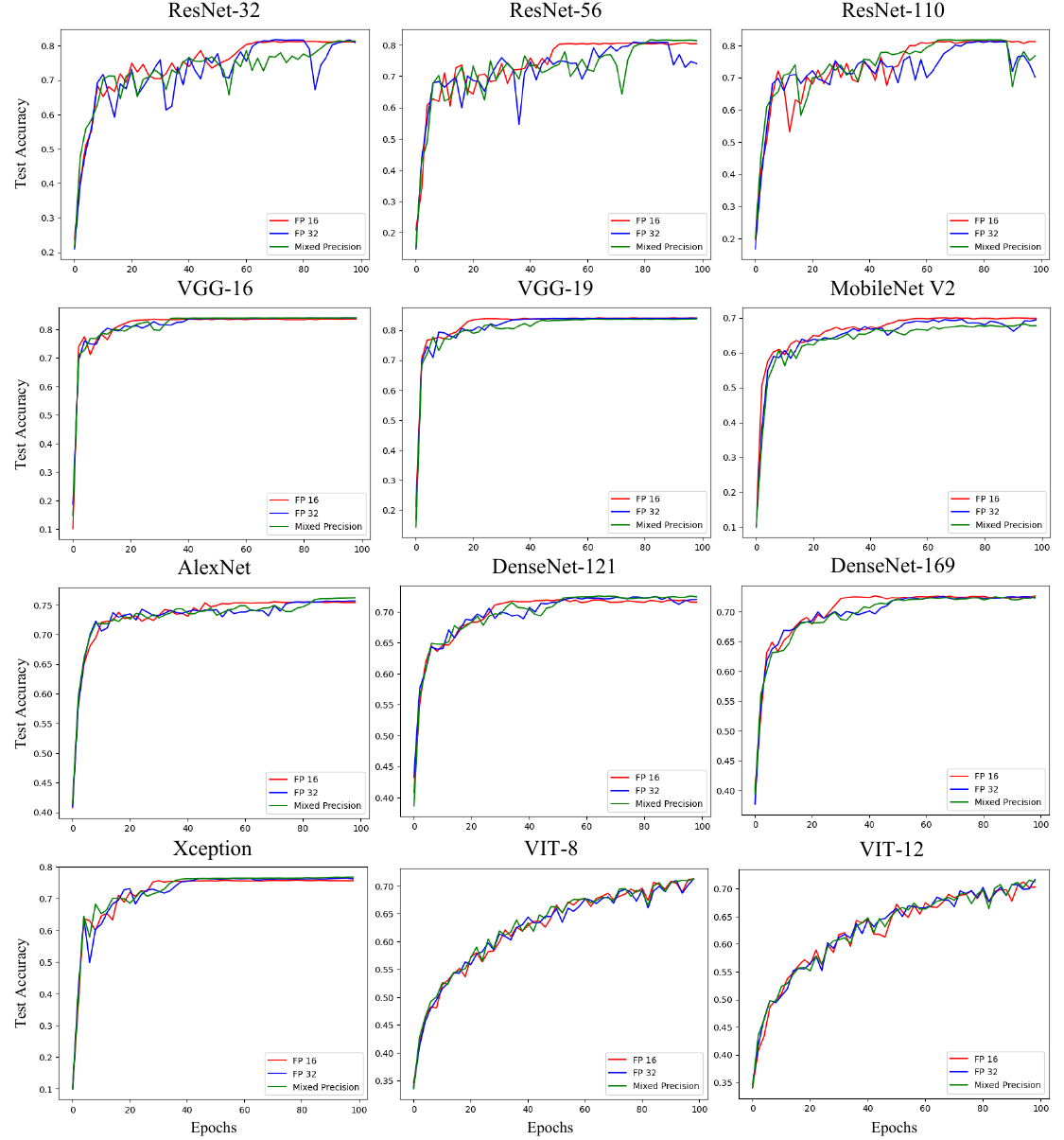}
  \caption{ Comparative test accuracy over 100 epochs on  CNNs and Vision Transformer (ViT) architectures utilizing IEEE 16-bit, 32-bit, and mixed precision.}
  \label{fig:testacc}
\end{figure*}

\subsection{Precisions} \label{sect:baseline}  
We compare three floating-point precision schemes: FP16 (IEEE 16-bit), FP32 (standard 32-bit), and MP (mixed precision). For FP16, we use TensorFlow’s \texttt{tf.keras.backend.set\_floatx('float16')} and ensure no unintended type casting to FP32 occurs during training. FP32 uses TensorFlow’s default 32-bit precision. For MP, we follow TensorFlow’s mixed precision policy via \texttt{tf.keras.} \texttt{mixed\_precision.set\_global\_policy('mixed\_float16')}, which allows FP16 operations with FP32 accumulations~\cite{mixed-prec-training, tensorflow2023, nvidia2023}, as originally proposed by Baidu and NVIDIA.

\subsection{Experimental Settings}
We conduct image classification experiments using the CIFAR-10 dataset~\cite{cifar}, converting it from 32-bit to IEEE 16-bit for FP16 training. Our evaluation spans 12 architectures, including 10 popular CNNs (AlexNet~\cite{alexnet}, VGG16/19~\cite{vgg}, ResNet-34/56~\cite{resnet}, MobileNetV2~\cite{mobilenetv2}, DenseNet-121/169~\cite{densenet}, and Xception~\cite{xception}) and two Vision Transformers (ViT-8 and ViT-12~\cite{dosovitskiy2021image}). All models are trained from scratch without pre-trained weights. Each experiment runs for 100 epochs with a batch size of 256 and a learning rate of 0.01 using SGD~\cite{sgd} with momentum 0.9~\cite{momentum-original}. No data augmentation or hyperparameter tuning is applied. Results are averaged over 50 random seeds, and all experiments are performed on a single RTX 4080 GPU with 40GB RAM, reporting both accuracy and wall-clock training time.

% We use 100 epochs for all experiments and fix the batch size to 128. All random seeds are set consistently to ensure comparability. The experiments are conducted on NVIDIA’s RTX3080 Laptop GPU.

% We experiment with several types of models, including four CNN architectures: AlexNet~\cite{alexnet}, VGG16~\cite{vgg}, ResNet-34, 56~\cite{resnet}, and MobileNetV2~\cite{mobilenetv2}. Additionally, we also examine the Vision Transformer (ViT-8)~\cite{dosovitskiy2021image}. 

\subsection{Results}

\begin{figure*}[t]
  \centering
  % \includesvg[width=1.0\textwidth]{newfig/TESTACCG.png}
  % \includegraphics[width=1.0\textwidth]{figures/boxplot.pdf}
  \includegraphics[width=1.0\textwidth]{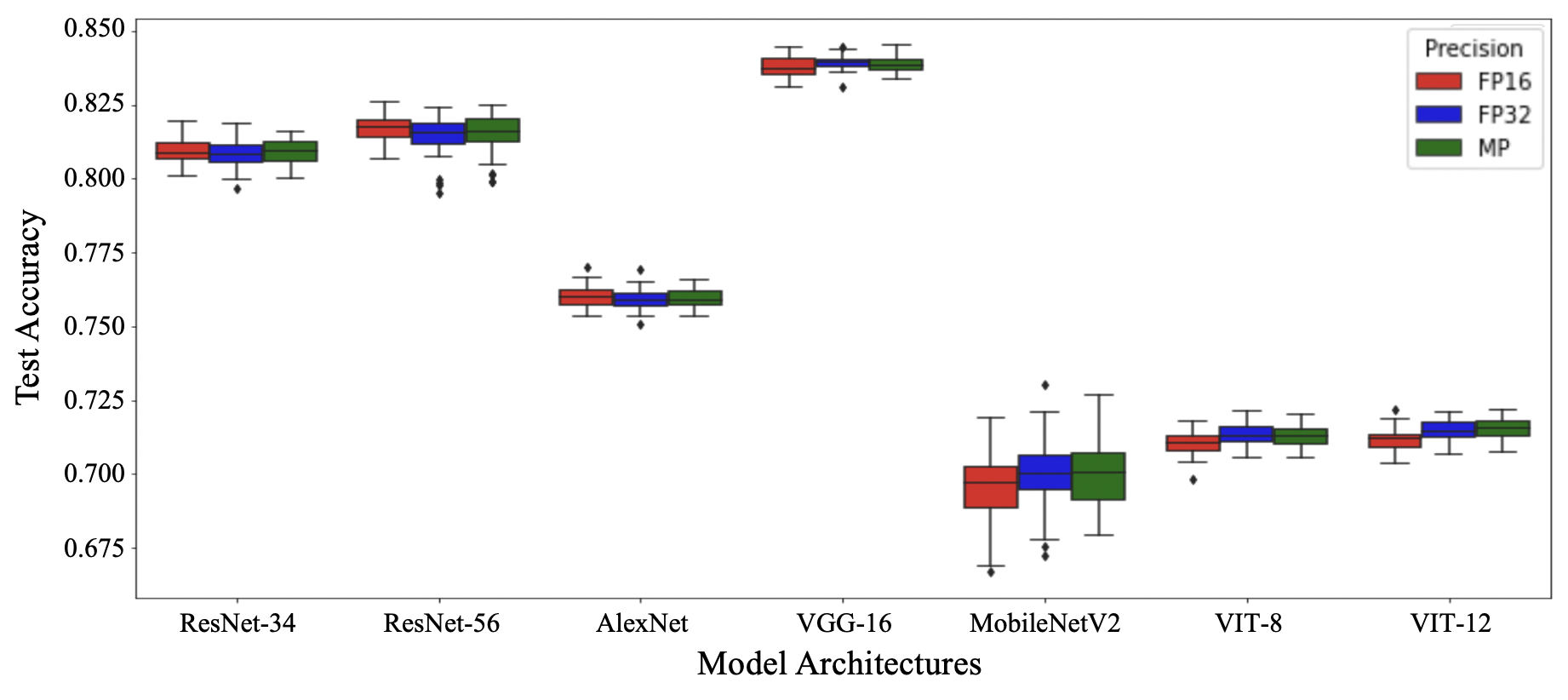}
  \caption{Boxplot of Test Accuracy: This figure illustrates the performance of CNN models and the Vision Transformer across three floating-point precisions: IEEE 16-bit, 32-bit, and mixed precision. Results from 50 random seeds are included to ensure unbiased representation. The  white lines indicate the medians, while the white dots represent outliers.}
  \label{fig:box}
\end{figure*}

As summarized in Tab.~\ref{tab:summary}, our results demonstrate that IEEE 16-bit floating point (FP16) offers a significant computational advantage over both 32-bit (FP32) and mixed precision (MP), with an average speedup of 1.6$\times$ over FP32 and 1.3$\times$ over MP. For instance, FP16 reduces VGG-16 training time from 857s (FP32) to 377s, and Xception from 611s to 324s. In terms of accuracy, FP16 performs competitively, with only a 0.2\% average decrease compared to FP32 and MP across 12 architectures. In some cases, such as AlexNet and ResNet-32, FP16 matches or even slightly outperforms the alternatives. Fig.~\ref{fig:testacc} illustrates test accuracy trends over training epochs, showing no consistent winner across precisions. The boxplot in Fig.~\ref{fig:box}, averaged over 50 random seeds, further shows that FP16 yields similar or better median accuracy with lower standard deviation in most models, though occasional outliers suggest reduced robustness, as discussed in Sect.~\ref{sect:threats}. Overall, FP16 proves to be an efficient and accurate alternative for deep learning under hardware constraints.

\section{Discussion: Numerical Stability and Hyperparameter Tuning}
\label{sect:threats}
While FP16 performs competitively across most architectures, we observe occasional performance drops in a few models due to numerical instability, particularly from overflow or underflow errors inherent to the limited dynamic range of 16-bit floats. As shown in Fig.~\ref{fig:box}, FP16 underperforms in 3 out of 7 architectures when considering outlier cases. However, these anomalies are rare and can be mitigated by averaging results across multiple runs, making FP16 a viable option even under strict memory constraints.

\setlength{\tabcolsep}{8pt} % Increase this value to widen column spacing
\begin{table}[htbp]
\centering
\footnotesize
\caption{Performance analysis of the Adam optimizer was conducted using float16 precision format on CIFAR-10~\cite{cifar}. It was observed that the float16 format did not exhibit optimal performance with the Adam optimizer~\cite{adam} when the epsilon parameter was set below \(1 \times 10^{-5}\). Consequently, the Stochastic Gradient Descent (SGD) optimizer was employed as an alternative, which did not demonstrate any performance issues under these conditions.}
% \small
\begin{tabular}{@{}lccccccc@{}}
\toprule
& \multicolumn{7}{c}{FP16 ADAM} \\
\cmidrule(lr){2-8} 
Epsilon & 0.01 & 0.001 & 1e-4 & 1e-5 & 1e-6 & 1e-7 & Time \\
\midrule
ResNet-18~\cite{resnet} & 74.4 & 78.2 & 74.7 & 22.5 & 12.8 & 11.7 & 10.98s \\
ResNet-34~\cite{resnet} & 78.5 & 79.7 & 77.7 & 18.2 & 17.0 & 10.0 & 23.61s \\
\bottomrule
\end{tabular}

\label{tab:bfloat}
\end{table}

We further analyze the impact of optimizer hyperparameters in FP16 training. Table~\ref{tab:bfloat} presents a performance comparison using the Adam optimizer~\cite{adam} on CIFAR-10~\cite{cifar} with varying epsilon values. The results show that FP16 suffers from significant accuracy degradation when $\epsilon < 10^{-5}$, likely due to instability in the denominator of the update rule:
\begin{align}
\text{Adam:} \quad w_t \leftarrow  w_{t-1} - \eta \frac{\hat{m_t}}{\sqrt{\hat{v_t}} + \epsilon}.
\end{align}
A similar concern applies to RMSProp~\cite{rmsprop}. The default $\epsilon=10^{-7}$ in TensorFlow, while suitable for FP32, may cause numerical issues in FP16 when $\hat{v_t}$ or $v_t$ are close to zero. To address this, we adopted SGD~\cite{sgd}, which exhibited stable performance without requiring tuning. This suggests that FP16 networks can avoid instability by using simpler optimizers and avoiding extreme hyperparameter values.

% \subsection{Implementing 16-bit Batch Normalization}
% TensorFlow’s existing batch normalization layer doesn’t natively support 16-bit operations. Instead, it converts 16-bit inputs to 32-bit for calculations and then reverts back to 16-bit, introducing computational overhead due to the intermediate 32-bit processing. To meet our goal of evaluating standalone IEEE 16-bit neural networks, we implemented a dedicated 16-bit batch normalization layer. All our experimental results are based on this adapted layer, which fully supports 16-bit operations without requiring conversion to 32-bit.

% \section{Related Work}
% \label{relwork}
%\input{relwork}
\section{Conclusion}
\label{sect:conc}
This study presents the first comprehensive theoretical and empirical analysis of standalone IEEE 16-bit floating-point neural networks. Contrary to the common reliance on mixed or higher precision, we rigorously demonstrate that 16-bit precision alone can match 32-bit performance in both accuracy and stability across a range of models, while offering substantial improvements in training speed and memory efficiency. These findings are particularly relevant for practitioners with limited hardware access, where support for formats like FP8 or FP4 is unavailable. Our theoretical framework based on floating-point error and classification tolerance provides insights into when 16-bit can safely replace 32-bit precision. While some numerical instability remains in edge cases, our results support broader adoption of 16-bit training as a practical, efficient, and accessible solution for deep learning in resource-constrained environments.

\section{Statements and Declarations} 
The researcher claims no conflicts of interest. This research was partially supported by the National Research Foundation of Korea (NRF) under Grant RS-2022-00165647. This work was also partly supported by the Institute of Information \& Communications Technology Planning \& Evaluation(IITP)-Innovative Human Resource Development for Local Intellectualization program grant funded by the Korea government(MSIT)(IITP-2025-RS-2023-00259678, 50\%). Byungkon Kang also acknowledges support from Moida Inc.

%
%
%
% ---- Bibliography ----
%
% BibTeX users should specify bibliography style 'splncs04'.
% References will then be sorted and formatted in the correct style.
%
\bibliographystyle{splncs04}
% \bibliography{mybibliography}
%
\bibliography{main}

\begin{thebibliography}{10}
\providecommand{\url}[1]{\texttt{#1}}
\providecommand{\urlprefix}{URL }
\providecommand{\doi}[1]{https://doi.org/#1}

\bibitem{IEEE_P3109_2024}
Association, I.S.: {IEEE P3109}: Standard for arithmetic formats for machine learning (2024), available at: \url{https://standards.ieee.org/ieee/3109/11010/}. Accessed: May 30, 2024

\bibitem{brown2020language}
Brown, T., Mann, B., Ryder, N., Subbiah, M., Kaplan, J.D., Dhariwal, P., Neelakantan, A., Shyam, P., Sastry, G., Askell, A., Agarwal, S., Herbert-Voss, A., Krueger, G., Henighan, T., Child, R., Ramesh, A., Ziegler, D., Wu, J., Winter, C., Hesse, C., Chen, M., Sigler, E., Litwin, M., Gray, S., Chess, B., Clark, J., Berner, C., McCandlish, S., Radford, A., Sutskever, I., Amodei, D.: Language models are few-shot learners. In: Advances in Neural Information Processing Systems. vol.~33, pp. 1877--1901 (2020), \url{https://proceedings.neurips.cc/paper_files/paper/2020/file/1457c0d6bfcb4967418bfb8ac142f64a-Paper.pdf}

\bibitem{fxpnet}
Chen, X., Hu, X., Zhou, H., Xu, N.: Fxp{N}et: Training a deep convolutional neural network in fixed-point representation. In: Proceedings of the International Joint Conference on Neural Networks (2017)

\bibitem{choi2018pact}
Choi, J., Wang, Z., Venkataramani, S., Chuang, P., Srinivasa, V., Gopalakrishnan, K.: Pact: Parameterized clipping activation for quantized neural networks. In: International Conference on Learning Representations (2018)

\bibitem{xception}
Chollet, F.: Xception: Deep learning with depthwise separable convolutions. In: Proceedings of the IEEE Conference on Computer Vision and Pattern Recognition. pp. 1251--1258 (2017)

\bibitem{bin-connect}
Courbariaux, M., Bengio, Y., David, J.P.: Binary{C}onnect: Training deep neural networks with binary weights during propagations. In: Proceedings of Neural Information Processing Systems (2015)

\bibitem{courbariaux2015binaryconnect}
Courbariaux, M., Bengio, Y., David, J.P.: Binaryconnect: Training deep neural networks with binary weights during propagations. In: Advances in neural information processing systems. pp. 3123--3131 (2015)

\bibitem{mixed-prec-cnn}
Das, D., Mellempudi, N., Mudigere, D., Kalamkar, D., Avancha, S., Banerjee, K., Sridharan, S., Vaidyanathan, K., Kaul, B., Georganas, E., Heinecke, A., Dubey, P., Corbal, J., Shustrov, N., Dubtsov, R., Fomenko, E., Pirogov, V.: Mixed precision training of convolutional neural networks using integer operations. In: Proceedings of International Conference on Learning Representations (2018)

\bibitem{low-prec-sgd}
De~Sa, C., Feldman, M., Ré, C., Olukotun, K.: Understanding and optimizing asynchronous low-precision stochastic gradient descent. In: Proceedings of International Symposium on Computer Architecture (2017)

\bibitem{dettmers2022llm}
Dettmers, T., Lewis, M., Belkada, Y., Zettlemoyer, L.: {LLM}.int8(): 8-bit matrix multiplication for transformers at scale. In: Advances in Neural Information Processing Systems (2024)

\bibitem{dosovitskiy2021image}
Dosovitskiy, A., Beyer, L., Kolesnikov, A., Weissenborn, D., Zhai, X., Unterthiner, T., Dehghani, M., Minderer, M., Heigold, G., Gelly, S., Uszkoreit, J., Houlsby, N.: An image is worth 16x16 words: Transformers for image recognition at scale. In: International Conference on Learning Representations (2021), \url{https://openreview.net/forum?id=YicbFdNTTy}

\bibitem{huggingface2024}
FriendliAI: Meta-llama-3-70b-fp8. Hugging Face (2024), \url{https://huggingface.co/meta-llama/Meta-Llama-3-70B-fp8}, available at: \url{https://huggingface.co/meta-llama/Meta-Llama-3-70B-fp8}. Accessed: 2024-05-30

\bibitem{gale2019state}
Gale, T., Elsen, E., Hooker, S.: The state of sparsity in deep neural networks. In: International Conference on Learning Representations (2019)

\bibitem{sgd}
Goodfellow, I., Bengio, Y., Courville, A.: Deep Learning. MIT Press (2016)

\bibitem{tensorflow2023}
Google: Mixed precision training guide (2023), available at: \url{https://www.tensorflow.org/guide/mixed_precision}. Accessed: Aug 15, 2023

\bibitem{dl-limprec}
Gupta, S., Agrawal, A., Gopalakrishnan, K., Narayanan, P.: Deep learning with limited numerical precision. In: Proceedings of International Conference on Machine Learning (2015)

\bibitem{han2015learning}
Han, S., Pool, J., Tran, J., Dally, W.: Learning both weights and connections for efficient neural network. In: Advances in Neural Information Processing Systems. pp. 1135--1143 (2015)

\bibitem{resnet}
He, K., Zhang, X., Ren, S., Sun, J.: Identity mappings in deep residual networks. In: Proceedings of the European Conference on Computer Vision (2016)

\bibitem{higham2002accuracy}
Higham, N.J.: Accuracy and Stability of Numerical Algorithms. Society for Industrial and Applied Mathematics, second edn. (2002). \doi{10.1137/1.9780898718027}, \url{https://epubs.siam.org/doi/abs/10.1137/1.9780898718027}

\bibitem{rmsprop}
Hinton, G.: Neural networks for machine learning (2018), \url{https://www.cs.toronto.edu/~tijmen/csc321/slides/lecture_slides_lec6.pdf}, lecture 6a: Overview of mini-batch gradient descent

\bibitem{hinton2015distilling}
Hinton, G., Vinyals, O., Dean, J.: Distilling the knowledge in a neural network. In: NIPS Deep Learning and Representation Learning Workshop (2015), \url{http://arxiv.org/abs/1503.02531}

\bibitem{densenet}
Huang, G., Liu, Z., Van Der~Maaten, L., Weinberger, K.Q.: Densely connected convolutional networks. In: Proceedings of the IEEE Conference on Computer Vision and Pattern Recognition. pp. 4700--4708 (2017)

\bibitem{quant-nn}
Jacob, B., Kligys, S., Chen, B., Zhu, M., Tang, M., Howard, A., Adam, H., Kalenichenko, D.: Quantization and training of neural networks for efficient integer-arithmetic-only inference. In: Proceedings of the IEEE Conference on Computer Vision and Pattern Recognition (2017)

\bibitem{adam}
Kingma, D.P., Ba., J.L.: Adam: A method for stochastic optimization. In: Proceedings of International Conference on Learning Representations (2015)

\bibitem{koster2020bfloat16}
Koster, U., et~al.: Bf16: Revisiting bf16 training. Proceedings of the International Conference on Machine Learning  (2020)

\bibitem{cifar}
Krizhevsky, A., Hinton, G.: Learning multiple layers of features from tiny images. Tech. rep., University of Toronto (2009)

\bibitem{krizhevsky2012imagenet}
Krizhevsky, A., Sutskever, I., Hinton, G.E.: Imagenet classification with deep convolutional neural networks. Advances in Neural Information Processing Systems  \textbf{25},  1097--1105 (2012)

\bibitem{alexnet}
Krizhevsky, A., Sutskever, I., Hinton, G.E.: Image{N}et classification with deep convolutional neural networks. In: Proceedings of Neural Information Processing Systems (2012)

\bibitem{flexpoint}
Köster, U., Webb, T.J., Wang, X., Nassar, M., Bansal, A.K., Constable, W.H., Elibol, O.H., Gray, S., Hall, S., Hornof, L., Khosrowshahi, A., Kloss, C., Pai, R.J., Rao, N.: Flexpoint: An adaptive numerical format for efficient training of deep neural networks. In: Proceedings of Neural Information Processing Systems (2017)

\bibitem{DBLP:journals/corr/abs-2402-05147}
Liao, B., Monz, C.: Api{Q}: Finetuning of 2-bit quantized large language model. Arxiv Preprint  (2024). \doi{10.48550/ARXIV.2402.05147}, \url{https://doi.org/10.48550/arXiv.2402.05147}

\bibitem{fp-cnn}
Lin, D.D., Talathi, S.S., Annapureddy, V.S.: Fixed point quantization of deep convolutional networks. In: Proceedings of the International Conference on Machine Learning (2016)

\bibitem{DBLP:journals/corr/abs-2402-17764}
Ma, S., Wang, H., Ma, L., Wang, L., Wang, W., Huang, S., Dong, L., Wang, R., Xue, J., Wei, F.: The era of 1-bit llms: All large language models are in 1.58 bits. Arxiv Preprint  (2024). \doi{10.48550/ARXIV.2402.17764}, \url{https://doi.org/10.48550/arXiv.2402.17764}

\bibitem{mixed-prec-training}
Micikevicius, P., Narang, S., Alben, J., Diamos, G., Elsen, E., Garcia, D., Ginsburg, B., Houston, M., Kuchaiev, O., Venkatesh, G., Wu, H.: Mixed precision training. In: Proceedings of International Conference on Learning Representations (2018)

\bibitem{Nvidia}
NVIDIA: {NVIDIA} ampere ga102 gpu architecture (2020), available at: \url{https://www.nvidia.com/en-us/geforce/technologies/ampere-architecture/}. Accessed: Sep 27, 2024

\bibitem{nvidia2023}
NVIDIA: Train with mixed precision (2023), available at: \url{https://docs.nvidia.com/deeplearning/performance/mixed-precision-training/index.html}. Accessed: Aug 15, 2023

\bibitem{nvidia_tensor_cores}
{NVIDIA}: Tensor cores (2024), available at: \url{https://www.nvidia.com/en-gb/data-center/tensor-cores/}. Accessed: 2024-09-27

\bibitem{nvidia_ampere_tuning_guide}
{NVIDIA}: Tuning cuda applications for nvidia ampere gpu architecture (2024), available at: \url{https://docs.nvidia.com/cuda/ampere-tuning-guide/index.html}. Accessed: Sept 27, 2024

\bibitem{fixed-point}
Padgett, W.T., Anderson, D.V.: Fixed-Point Signal Processing. Synthesis Lectures on Signal Processing, Springer Cham, 1 edn. (2009). \doi{10.1007/978-3-031-02533-4}

\bibitem{pytorch2024}
PyTorch: Accelerating llama3 fp8 inference with triton kernels. PyTorch Blog (2024), \url{https://pytorch.org/blog/accelerating-llama3-fp8-inference/}, available at: \url{https://pytorch.org/blog/accelerating-llama3-fp8-inference/}. Accessed: May 30, 2024

\bibitem{momentum-original}
Qian, N.: A method for speeding up the convergence of back-propagation learning. Neural Networks  \textbf{6}(4),  861--867 (1999)

\bibitem{rastegari2016xnor}
Rastegari, M., Ordonez, V., Redmon, J., Farhadi, A.: {XNOR-N}et: Imagenet classification using binary convolutional neural networks. In: European Conference on Computer Vision. pp. 525--542 (2016)

\bibitem{fpga-dsp}
Ristov, S., Malkin, E., Zilic, Z.: Efficient deep learning inference on embedded systems using fixed-point arithmetic on fpgas. Journal of Signal Processing Systems  \textbf{91}(1),  1--13 (2019)

\bibitem{mobilenetv2}
Sandler, M., Howard, A., Zhu, M., Zhmoginov, A., Chen, L.C.: Mobilenetv2: Inverted residuals and linear bottlenecks. In: Proceedings of the IEEE Conference on Computer Vision and Pattern Recognition (2018)

\bibitem{bitfusion}
Sharma, H., Park, J., Suda, N., Lai, L., Chau, B., Kim, J.K., Chandra, V., Esmaeilzadeh, H.: Bit {F}usion: Bit-level dynamically composable architecture for accelerating deep neural networks. In: Proceedings of International Symposium on Computer Architecture (2017)

\bibitem{vgg}
Simonyan, K., Zisserman, A.: Very deep convolutional networks for large-scale image recognition. In: Proceedings of International Conference on Learning Representations (2015)

\bibitem{4610935}
{C}omputer {S}ociety, I.: {IEEE} standard for floating-point arithmetic. {IEEE} Std 754-2019 (Revision of IEEE 754-2008) pp. 1--84 (2019)

\bibitem{touvron2021training}
Touvron, H., Cord, M., Sablayrolles, A., Synnaeve, G., J{\'e}gou, H.: Training data-efficient image transformers and distillation through attention. International Conference on Machine Learning  (2021)

\bibitem{wang2018training}
Wang, N., Choi, J., Brand, D., Chen, C.Y., Gopalakrishnan, K.: Training deep neural networks with 8-bit floating point numbers. In: Proceedings of the International Conference on Neural Information Processing Systems. p. 7686–7695 (2018)

\bibitem{int-dnn}
Wu, S., Li, G., Chen, F., Shi, L.: Training and inference with integers in deep neural networks. In: Proceedings of International Conference on Learning Representations (2018)

\bibitem{xi2023training}
Xi, H., Li, C., Chen, J., Zhu, J.: Training transformers with 4-bit integers. In: Advances in Neural Information Processing Systems (2024)

\bibitem{smoothquant}
Xiao, G., Lin, J., Seznec, M., Wu, H., Demouth, J., Han, S.: Smooth{Q}uant: accurate and efficient post-training quantization for large language models. In: Proceedings of the International Conference on Machine Learning (2023)

\bibitem{zafrir2019q8bert}
Zafrir, D., Boudoukh, G., Izsak, P., Wasserblat, M.: {Q8BERT: Q}uantized 8bit {BERT}. In: Proceedings of the 5th Workshop on Energy Efficient Machine Learning and Cognitive Computing-NeurIPS 2019. pp. 36--39 (2019)

\bibitem{zhang2020ternarybert}
Zhang, W., Liu, C., Ma, Y., Zhang, F., Li, S., Zhang, Y.: Ternarybert: Distillation-aware ultra-low bit bert. In: Proceedings of the 2020 Conference on Empirical Methods in Natural Language Processing: Findings. pp. 509--521 (2020)

\bibitem{zhou2016dorefa}
Zhou, S., Wu, Y., Ni, Z., Zhou, X., Wen, H., Zou, Y.: Do{R}e{F}a-{N}et: Training low bitwidth convolutional neural networks with low bitwidth gradients. In: Arxiv Preprint (2016)

\end{thebibliography}

\end{document}